\newtheorem{theorem}{Theorem}
\newtheorem{proposition}{Proposition}
\newcommand\siyu[1]{\textcolor{black}{#1}}
\title{Memory-Efficient Training with In-Place FFT Implementation}
\author{
  Xinyu Ding\textsuperscript{\rm 1},
  Bangtian Liu,
  Siyu Liao\textsuperscript{\rm 1}\thanks{Siyu Liao is the corresponding author.},
  Zhongfeng Wang\textsuperscript{\rm 1} \\
  \textsuperscript{\rm 1}School of Integrated Circuits, Sun Yat-sen University \\
  \texttt{\{dingxy33, liaosy36, wangzf83\}@mail.sysu.edu.cn},
  \texttt{liubangtian@gmail.com},
}
\begin{document}

\maketitle

\begin{abstract}
Fast Fourier Transforms (FFT) are widely used to reduce memory and computational costs in deep learning. However, existing implementations, including standard FFT and real FFT (rFFT), cannot achieve true in-place computation. In particular, rFFT maps an input of size $n$ to a complex output of size $n/2+1$, causing dimensional mismatch and requiring additional memory allocation.
We propose the first real-domain, fully in-place FFT framework (rdFFT) that preserves input-output memory space consistency. 
By leveraging butterfly operation symmetry and conjugate properties in the frequency domain, we design an implicit complex encoding scheme that eliminates intermediate cache usage entirely.
Experiments on multiple natural language understanding tasks demonstrate the method  effectiveness in reducing training memory cost, offering a promising direction for frequency-domain lightweight adaptation.
\end{abstract}

\section{Introduction}

Large-scale neural models have achieved remarkable success across a wide range of applications, such as natural language processing \citep{GPT-3}\citep{T5} and computer vision \citep{ViT}. 
\siyu{As model sizes increase, memory consumption has emerged as a significant challenge.
Notably, model training memory cost is larger than the deployment cost, primarily due to the backpropagation process \cite{lecun1988theoretical}. 
Thus, reducing memory usage has become a critical research study, especially for the training stage. 
}

Numerous methods have been proposed to reduce memory usage during model training and deployment.
\siyu{Commonly used approaches such as model distillation \cite{hinton2015distilling}, quantization \cite{courbariaux2015low}, and pruning \cite{DBLP:journals/corr/HanMD15} mainly aim to reduce memory consumption by decreasing the number or precision of model parameters. 
In contrast, this work takes a novel approach at the arithmetic operator level, implementing in-place Fast Fourier Transform (FFT) operations for model training through the use of circulant-structured parameter matrices \cite{cheng2015exploration,ding2017circnn,circulant}.
}
The FFT operator has been widely employed in various neural network architectures due to its efficiency in capturing global patterns and enabling structured transformations. 
For instance, Fourier-based fine-tuning methods such as FourierFT \citep{fourierft} and  Block Circulant Adapter (BCA) \citep{circulant} rely heavily on FFT to perform efficient parameter transformations in the frequency domain. 

\siyu{It should be noted that FFT-based operators typically involve both FFT and IFFT computations, which generate intermediate tensors that consume memory and use different data type since the result of FFT is complex number.
}
Numerous studies have explored memory optimization strategies for FFT operators, particularly in high-performance programming libraries such as FFTW \citep{fftw} and cuFFT \cite{cufft_layout}. 
\siyu{These libraries support in-place computation, where input and output share memory space.
However, they suffer from the inability to maintain the original (input) memory space and lack of support for bfloat16 data type that is widely used for modern neural models. 
}



\siyu{To overcome the limitations of existing FFT libraries, we introduce rdFFT—a real-domain, fully in-place Fourier transform that produces the same output as rFFT, but operates entirely within the original $n$ real-valued input memory space.
We notice that the first and middle point of FFT results have zeros in their imaginary part.
Thus, they can be squeezed together so the final output only requires $n$ real-valued input memory space. 
}
Besides, our method exploits the symmetry of butterfly operations and the conjugate structure of real-valued spectra to implicitly encode complex information within real-valued tensors. 
This design enables in-place computation without the need for auxiliary buffers or dimension mismatches, facilitating seamless integration into modern deep learning workflows. 
Crucially, our in-place design supports consistent forward and backward passes entirely within the real domain.
In summary, our main contributions are as follows:
\begin{itemize}
    \item \siyu{We propose rdFFT, a novel real-valued, fully in-place Fourier transform operator that eliminates memory space mismatches, offering improved practicality and usability for neural network applications compared to existing libraries.}
    \item \siyu{We introduce a new memory layout design, develop a novel butterfly execution scheme for IFFT computation, and provide support for the bfloat16 data type, which is widely used in modern neural networks.}
    \item \siyu{We integrate our rdFFT into neural network models via circulant-structured parameter matrix and validate on real-world models, achieving zero-memory allocation for intermediate tensor computations.}
\end{itemize}

The implementation of the proposed operators is being discussed for upstreaming into PyTorch. The corresponding discussion is available at:
\url{https://github.com/pytorch/pytorch/issues/171022}.

\section{Related Works}

Current automatic differentiation frameworks often discourage in-place operations due to their challenges in gradient computation during training.  
The key benefit of in-place computation lies in its ability to save memory by eliminating the need for storing intermediate tensors. 
For instance, combining batch normalization and activation into a single in-place operation has been shown to reduce memory usage by up to half \citep{bulo2018place}. 
In neural network based hardware placement tasks \cite{liu2022xplace}, in-place operations are found beneficial due to their inherent memory efficiency. 
Similarly, in-place operations are integrated into convolutional neural networks for anomaly detection \cite{sun2023tinyad}, achieving notable memory savings.
The in-place operation can also be extended to broader contexts. 
For example, in in-place distillation \cite{yu2019universally}, a student model is distilled directly from the  teacher model without additional memory allocation.

The FFT operator can be found across various neural network models, particularly in tasks involving the Fourier domain, which is common in computer vision. 
Some convolutional neural networks run entirely in the Fourier domain \cite{pratt2017fcnn}. 
In beampattern synthesis \citep{zhao2023efficient}, an IFFT operator is applied to the hidden representations generated by the neural network. 
Additionally, 2D FFT has proven beneficial for fine-tuning large language models \cite{fourierft}. 
Circulant-structured matrix-vector products can also be efficiently computed using 1D FFT and IFFT, facilitating neural network training \cite{circulant}. 
Beyond these applications, FFT also plays a key role in spectral convolution \citep{Spectral} and in approximating global attention mechanisms \citep{Fnet}.

From the perspective of automatic differentiation, each operator can be seen as a neural network layer, with or without trainable parameters. 
For example, the low-rank fine-tuning method for large language models \cite{lora} represents a full weight matrix using two low-rank matrices, which is equivalent to introducing two linear layers, thereby requiring storage for intermediate activations. 
Similarly, circulant-structured weight matrices \cite{circulant} leverage FFT and IFFT to transform inputs and parameters. 
Although FFT operators themselves does not contain trainable parameters, they still require the preservation of intermediate results to support automatic differentiation and to handle complex-valued data types that is different from the real-valued inputs. 
Moreover, libraries such as FFTW \cite{fftw} and cuFFT \cite{cufft_layout} demand pre-allocated memory buffers of size $N+2$ real numbers. 
This memory pre-allocation should be handled during the neural model loading phase, which complicates integration and limits the practicality of using these libraries in real-world neural network applications.
Besides, they also does not support bfloat16 data type which is common in modern neural networks.

\section{Preliminary}

\subsection{Standard FFT and rFFT}
Fast Fourier Transform (FFT) is a computationally efficient algorithm for computing the Discrete Fourier Transform and its inverse. Given a sequence of $N$ real or complex values $x(n), n=0,1,\dots, N-1$, the FFT and inverse FFT are defined as follows:

\begin{equation}
\label{eq:Standard FFT}
y_k
=
\sum_{n=0}^{N-1} x_n \cdot e^{-i \frac{2\pi}{N} kn}, 
\quad
x_n 
=
\frac{1}{N} \sum_{k=0}^{N-1} y_k  \cdot e^{i \frac{2\pi}{N} kn}.
\end{equation}

This formulation, referred to as the standard FFT, transforms $N$ input elements into $N$ complex outputs. 
However, when the input is real, the FFT exhibits Hermitian symmetry, i.e., $y_{N-k} = \overline{y_k}$. 
Real-valued input based FFT (rFFT) exploits this property by computing only the first $ N/2 + 1$ complex values.

The rFFT significantly reduces the memory requirement compared to the complex-valued FFT, decreasing the output size from $2N$ real numbers (i.e., $N$ complex numbers) to $N + 2$ real numbers. 
This improvement is based on the following fundamental property of the FFT:

\begin{theorem}[Conjugate Symmetry of Real FFT \citep{conj_rDFT}]
Let $x \in \mathbb{R}^N$ be a real-valued sequence, and let $y_k$ denote its Fast Fourier Transform (FFT). Then the FFT output satisfies the conjugate symmetry property:
\begin{equation}
y_{N-k} = \overline{y_k}, \quad \text{for all } k = 1, 2, \dots, \left\lfloor \frac{N}{2} \right\rfloor.
\end{equation}
\end{theorem}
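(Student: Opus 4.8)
The plan is to prove the identity by direct substitution into the defining sum \eqref{eq:Standard FFT} and then exploit the reality of the input together with elementary properties of complex conjugation. This is a short, self-contained computation, so I would not introduce any machinery beyond the definition of the FFT already given.

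First I would write out $y_{N-k}$ from the definition, obtaining
\begin{equation}
y_{N-k} = \sum_{n=0}^{N-1} x_n \, e^{-i \frac{2\pi}{N}(N-k)n}.
\end{equation}
The key algebraic step is to split the exponent: $e^{-i \frac{2\pi}{N}(N-k)n} = e^{-i 2\pi n}\, e^{i \frac{2\pi}{N} k n}$, and since $n$ is an integer, $e^{-i 2\pi n} = 1$. Hence $y_{N-k} = \sum_{n} x_n \, e^{i \frac{2\pi}{N} k n}$.

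Next I would invoke the hypothesis $x \in \mathbb{R}^N$, so that $x_n = \overline{x_n}$ for every $n$. Using $\overline{ab} = \bar a \, \bar b$ and $\overline{e^{-i\theta}} = e^{i\theta}$, each summand satisfies $x_n e^{i \frac{2\pi}{N} k n} = \overline{x_n}\,\overline{e^{-i \frac{2\pi}{N} k n}} = \overline{x_n e^{-i \frac{2\pi}{N} k n}}$. Since conjugation is additive, it commutes with the finite sum, giving
\begin{equation}
y_{N-k} = \sum_{n=0}^{N-1} \overline{x_n \, e^{-i \frac{2\pi}{N} k n}} = \overline{\sum_{n=0}^{N-1} x_n \, e^{-i \frac{2\pi}{N} k n}} = \overline{y_k}.
\end{equation}
The restriction $k = 1, \dots, \lfloor N/2 \rfloor$ is not needed for the identity to hold (it holds for all $k$), but I would remark that this range is exactly the set of indices for which $y_{N-k}$ is a \emph{new} value not already among $y_0, \dots, y_k$, which is why rFFT stores only those first $N/2+1$ entries.

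There is essentially no hard part here: the only point requiring a word of care is the index bookkeeping in the exponent split ($e^{-i2\pi n}=1$ relies on $n\in\mathbb{Z}$, not on any property of $N$), and the justification that conjugation passes through the sum, which is immediate from additivity. I would keep the write-up to these three displayed lines plus a sentence on the index range.
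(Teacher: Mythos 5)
Your computation is correct and is the standard textbook argument; the paper itself states this theorem as a cited result (from \citep{conj_rDFT}) and gives no proof, so there is nothing to diverge from. The only nitpick is your closing remark about the index range: for even $N$ and $k=N/2$ the identity gives $y_{N/2}=\overline{y_{N/2}}$ (i.e.\ $y_{N/2}$ is real) rather than a ``new'' value, but this does not affect the proof, which as you note holds for all $k$.
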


This property implies that the FFT of a real-valued signal is \emph{redundant}—the full frequency-domain spectrum can be uniquely reconstructed from only the first $\left\lfloor \frac{N}{2} \right\rfloor + 1$ complex coefficients:
$y_0, y_1, \dots, y_{\left\lfloor \frac{N}{2} \right\rfloor}.$
Leveraging this redundancy, rFFT implementations store only the non-redundant half of the spectrum, thereby reducing both computation and memory footprint. 

However, this comes at the cost of mismatched memory sizes between the input and output: in the FFT, $N$ real-valued inputs are transformed into outputs that take the memory space of $N+2$ real values; in the IFFT, the complex inputs taking the space of $N+2$ real values are mapped back to $N$ real outputs. 
In both directions, the input and output cost different amount of memory. 

This memory misalignment is difficult for neural network training, where tensors are generally allocated with fixed shapes. 
The requirement to expand an $N$-element real tensor to $N+2$ elements necessitates either pre-allocation or runtime reallocation, which causes integration difficulty, prevents true in-place computation and potentially incur significant overhead in memory-constrained environments.

\begin{figure}[tb]
\centering
\includegraphics[width=\textwidth]{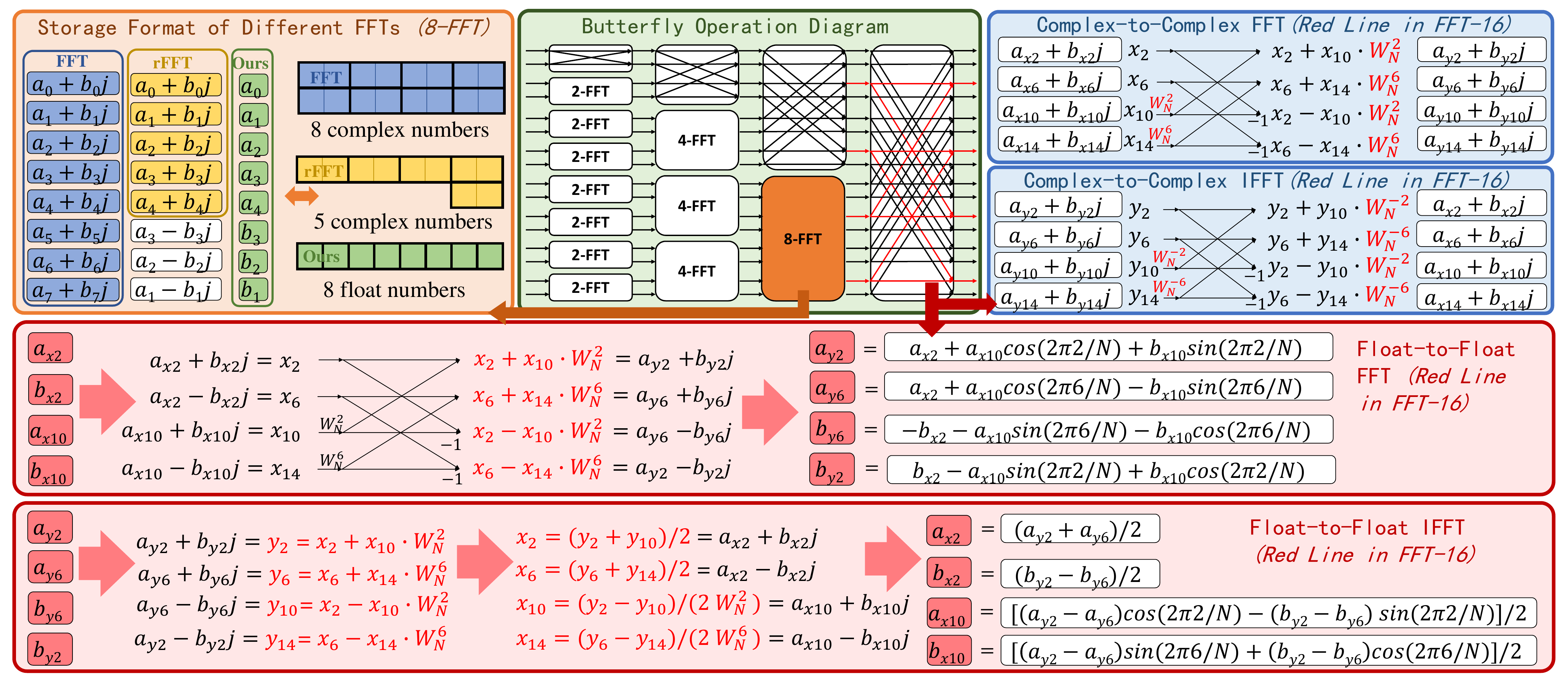}
\caption{Overview of our method and its differences from standard FFT and rFFT implementations. The green section depicts the Butterfly Operation Diagram using a 16-point FFT (16-FFT) as an example. The orange section illustrates the storage formats of different FFT implementations, shown on an 8-point FFT (8-FFT). Two representative butterfly computation paths in the 16-FFT are highlighted in red, and expanded into: (i) the blue section showing Complex-to-Complex FFT and IFFT operations, and (ii) the red section showing Float-to-Float FFT and IFFT operations—both derived from the red paths in the 16-FFT diagram. This figure summarizes the key computational flows and memory layouts addressed by our in-place real-domain FFT design.}
\label{fig:overview}
\end{figure}

\subsection{In-place Transform via Butterfly Operation}
The FFT can be efficiently computed using the Cooley–Tukey algorithm \cite{cooley1965algorithm}, which recursively decomposes given FFT into smaller FFTs. 
As illustrated in the Butterfly Operation Diagram section of Fig.\ref{fig:overview}, a 16-point FFT is first recursively decomposed into smaller FFTs—two 8-point FFTs, then four 4-point FFTs, and finally eight 2-point FFTs. 
Following this hierarchical decomposition, the FFT is computed through a series of butterfly operations applied at each level of the recursion.
At the core of the algorithm lies the butterfly operation, a fundamental computation that transforms a pair of complex values into two outputs using addition, subtraction, and multiplication by a twiddle factor.

As illustrated in the blue-shaded region of Fig.\ref{fig:overview}, the ``Complex-to-Complex FFT'' section provides two butterfly operations, both originating from FFT16, which are highlighted in red. The butterfly operation for an $r$-point FFT is defined as:

\begin{equation}
\label{eq:FFT_r}
\begin{split}
y_k &= x_k + x_{k + \frac{r}{2}} \cdot W_N^m, \\
y_{k + \frac{r}{2}} &= x_k - x_{k + \frac{r}{2}} \cdot W_N^m,
\end{split}
\end{equation}

where $W_N^m$ is the twiddle factor, $N$ is the total size of the FFT, and $r$ is the size of the current sub-FFT. The exponent $m$ is given by $m = 1 + \log_2 \left( N/r \right)$. 
For each $r$-point FFT, there are $\frac{r}{2}$ such butterfly pairs, with $k = 0, 1, \ldots, r/2 - 1$. The same butterfly operations are used in the ``Complex-to-Complex IFFT'' section, differing only in the choice of twiddle factors. 
The overall structure of the Cooley–Tukey FFT and its inverse (IFFT) is almost the same, except that the IFFT applies a final normalization factor of $1/N$.

It can be noticed that butterfly operations are inherently in-place, allowing outputs to overwrite inputs without the need for extra memory. 
This makes FFT especially suitable for memory-efficient implementations. 
However, since most neural network computations involve real-valued tensors, converting these to complex-valued representations usually requires additional memory allocation. 
With real-valued input and in-place constraint, given the aforementioned memory mismatch, the butterfly process is not directly applicable. 

\subsection{FFT based Model Training via Circulant Matrix}
The circulant matrix based neural network training can be converted to FFT and IFFT operations for acceleration \cite{cheng2015exploration}.
Given a real-valued circulant weight matrix $\mathbf{C} \in \mathbb{R}^{N \times N}$ defined by its first column $\mathbf{c} \in \mathbb{R}^N$, the linear transformation $\mathbf{y} = \mathbf{C} \mathbf{x}$ can be equivalently computed in the frequency domain as follows:
\begin{equation}
\label{eq:circ_forward}
\mathbf{y} = \texttt{IFFT}\left( \texttt{FFT}(\mathbf{c}) \odot \texttt{FFT}(\mathbf{x}) \right),
\end{equation}
where $\odot$ denotes elementwise multiplication.

This structure not only accelerates computation but also simplifies the gradient computation during training. 
By leveraging the linearity and conjugate of the FFT results \cite{circulant}, gradients with respect to both input $\mathbf{x}$ and parameter $\mathbf{c}$ can be computed by:
\begin{align}
\label{eq:circ_backward}
\frac{\partial \mathcal{L}}{\partial \mathbf{x}} &= 
\texttt{IFFT} \left( \overline{ \texttt{FFT}(\mathbf{c}) } \odot \texttt{FFT} \left( \frac{\partial \mathcal{L}}{\partial \mathbf{y}} \right) \right), ~~~~~~
\frac{\partial \mathcal{L}}{\partial \mathbf{c}} = 
\texttt{IFFT} \left( \overline{ \texttt{FFT}(\mathbf{x}) } \odot \texttt{FFT} \left( \frac{\partial \mathcal{L}}{\partial \mathbf{y}} \right) \right),
\end{align}
where $\overline{\texttt{FFT}(\cdot)}$ means taking the conjugate of the FFT results. 
There is also block circulant matrix based training that aims to fit non-square matrix \cite{ding2017circnn}, where the matrix is divided into blocks by partition size $p$.

\section{Method}

\subsection{In-place FFT with Real-valued Input}

While the Cooley–Tukey algorithm enables efficient in-place computation of the Fast Fourier Transform (FFT), it operates inherently in the complex domain. However, in modern neural networks, the vast majority of model parameters are real-valued. Transitioning between real and complex representations introduces unnecessary memory overhead, especially in memory-constrained training settings.
To address this issue, we propose new strategies to reduce the memory footprint of Fourier-based transformations at the layer level. 

\paragraph{Conjugate Symmetry in Cooley–Tukey Sub-FFTs.}
The Cooley–Tukey algorithm recursively decomposes a length-$N$ FFT into smaller FFTs. 
If the original input sequence $x \in \mathbb{R}^N$ is real-valued, then each recursively computed sub-FFT also receives a real-valued input (or a linear combination of real values). 
Since the FFT is a linear operation and conjugate symmetry is preserved under linear combinations, each sub-FFT applied to real-valued data also satisfies the conjugate symmetry property:
\begin{equation}
y_{r - k} = \overline{y_{k}}, \quad \text{for all } k = 1, \dots, \left\lfloor \frac{r}{2} \right\rfloor,
\end{equation}
where $y$ denotes the output of a size-$r$ FFT block.
Thus, conjugate symmetry is preserved at every level of the Cooley–Tukey decomposition. This property enables memory-efficient computation in real-input FFTs throughout the recursive stages.

\paragraph{Squeeze $N+2$ into $N$.}
Although the rFFT reduces memory usage of complex-valued FFT by exploiting conjugate symmetry, it stores $N/2 + 1$ complex numbers in the memory space of $N+2$ real numbers, which is different from the the original real-valued input of length $N$. 
This result does not allow for true in-place computation within the original real-valued buffer.
We further analyze the structure of the sub-FFTs in the Cooley–Tukey decomposition. 
For any $r$-point FFT of real-valued input, the output satisfies:
\[
y_0,\ y_{r/2} \in \mathbb{R}, \quad \text{and} \quad y_k = \overline{y_{r - k}},\ \text{for } k = 1, \dots, r/2 - 1.
\]

This implies that only the real parts of $y_0$ and $y_{r/2}$ need to be stored.
For remaining $r-2$ complex values, we only store the real and imaginary parts of $y_1, \dots, y_{r/2 - 1}$ , and the rest can be reconstructed via conjugation.
As a result, we successfully reduce the memory space of $N+2$ real values to the size of $N$ real values. 

\paragraph{Memory Layout Design.}
Based on aforementioned observation, we propose a data layout design where each complex coefficient $y_k$ ($1 \leq k < r/2$) stores its real part at index $k$ and stores its imaginary part at the conjugate-symmetric index $r - k$. In this way, the entire frequency-domain representation fits into a real-valued buffer of size $r$, without requiring any complex-valued memory. 
The special cases $y_0$ and $y_{r/2}$, which are always real, each occupy a single real-valued slot. 
The ``Storage Format of Different FFTs'' in Fig.\ref{fig:overview} illustrates our layout design in comparison to standard FFT and rFFT formats.

\begin{proposition}
\label{prop:conjugate-propagation}
In the Cooley–Tukey FFT algorithm on real-valued input, at each stage of the recursive decomposition, every conjugate-symmetric pair and its butterfly counterparts form a symmetric four-element group. This structural symmetry ensures that the butterfly operations preserve conjugate symmetry and can be performed entirely in-place.
\end{proposition}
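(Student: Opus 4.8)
The plan is to prove the statement by induction on the $\log_2 N$ butterfly stages of the radix-2 Cooley--Tukey recursion, showing that a single stage preserves the proposed real layout while writing only inside pairwise-disjoint four-element index groups. First I would fix the stage that assembles the length-$r$ blocks out of two length-$r/2$ sub-block outputs $E,O\in\mathbb{C}^{r/2}$ of the previous stage, stored contiguously (even block in the low $r/2$ slots, odd block in the next $r/2$ slots), with the butterfly
\begin{equation*}
y_j = E_j + \omega\,O_j,\qquad y_{j+r/2} = E_j - \omega\,O_j,\qquad \omega = W_r^{\,j},\quad j=0,\dots,r/2-1,
\end{equation*}
and outputs overwriting inputs. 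Since the original input is real, the paragraph ``Conjugate Symmetry in Cooley--Tukey Sub-FFTs'' gives $E_{r/2-j}=\overline{E_j}$, $O_{r/2-j}=\overline{O_j}$ and $y_{r-j}=\overline{y_j}$ at every level, and in particular $E_0,O_0,y_0,y_{r/2}$ are real.

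Next I would fix a representative interior index $1\le j<r/4$ and introduce the index set $G_j=\{\,j,\ r/2-j,\ r/2+j,\ r-j\,\}$. I would verify that $G_j$ consists of four distinct indices and is invariant under both the conjugate-partner involution $i\mapsto r-i$ and the butterfly-partner involution $i\mapsto i+r/2\pmod{r}$; hence $G_j$ is exactly the conjugate-symmetric pair $\{j,r-j\}$ together with the butterfly-counterpart pair $\{r/2-j,\ r/2+j\}$ of its members --- the ``symmetric four-element group'' of the statement. Using the layout convention (for $1\le i<r/2$ the real part of a coefficient lives at slot $i$ and its imaginary part at slot $r-i$; $y_0,y_{r/2}$ occupy the real singletons $0,r/2$), I would then carry out the slot bookkeeping on both sides of the stage: before the butterfly, the slots of $G_j$ hold $(\operatorname{Re}E_j,\operatorname{Im}E_j,\operatorname{Re}O_j,\operatorname{Im}O_j)$ --- the first two from the even-block layout and the last two from the odd-block layout shifted by $r/2$ --- and after the butterfly those same four slots must hold $(\operatorname{Re}y_j,\operatorname{Re}y_{r/2-j},\operatorname{Im}y_{r/2-j},\operatorname{Im}y_j)$, since $r/2-j<r/2$ forces $\operatorname{Im}y_{r/2-j}$ to slot $r-(r/2-j)=r/2+j$.

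The crux is then to check that the butterfly realises this $4\!\to\!4$ real map while touching no slot outside $G_j$, i.e.\ that $y_j$ and $y_{r/2-j}$ are functions of $E_j,O_j$ alone. The first is immediate; for the second I would combine $E_{r/2-j}=\overline{E_j}$, $O_{r/2-j}=\overline{O_j}$ with the twiddle identity $W_r^{\,r/2-j}=W_r^{\,r/2}W_r^{-j}=-\overline{\omega}$ to obtain $y_{r/2-j}=\overline{E_j}-\overline{\omega}\,\overline{O_j}=\overline{E_j-\omega O_j}=\overline{y_{j+r/2}}$. So one complex product $\omega O_j$, one add, one subtract, and a sign flip on a single stored imaginary slot suffice to overwrite the four entries of $G_j$ in place. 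The degenerate groups are handled separately and are simpler: $j=0$ gives the real map $(E_0,O_0)\mapsto(E_0+O_0,\,E_0-O_0)$ on $\{0,r/2\}$, and, when $4\mid r$, $j=r/4$ gives $(E_{r/4},O_{r/4})\mapsto(\operatorname{Re}y_{r/4},\operatorname{Im}y_{r/4})=(E_{r/4},-O_{r/4})$ on $\{r/4,3r/4\}$ --- both in place.

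Finally I would close the induction. The base case is the bit-reversed array of length-$1$ ``FFTs'', i.e.\ the real input itself, which is trivially in the claimed layout; and the analysis above shows that if the length-$(r/2)$ blocks are stored in the conjugate-symmetric real layout, then after one butterfly stage --- whose writes stay within the pairwise-disjoint groups $G_j$, which together with the two degenerate groups partition $\{0,\dots,r-1\}$ --- the length-$r$ blocks are too. Hence every stage decomposes into the asserted symmetric groups and the whole transform runs in place within the original $N$ real slots. I expect the main obstacle to be precisely the slot-to-component bookkeeping of the middle paragraph --- matching memory positions to real/imaginary parts across the even/odd block boundary and pinning down the two degenerate indices, especially at the earliest stages where $r/2<4$ leaves no interior index --- while the algebra (the identity $W_r^{\,r/2-j}=-\overline{\omega}$ and the conjugation of the butterfly output) is routine; I would also note that, as used here, the statement presumes the radix-2 (power-of-two $N$) decomposition, the mixed-radix case being analogous with correspondingly larger symmetric groups.
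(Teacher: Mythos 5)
Your proposal is correct and follows essentially the same route as the paper: your group $G_j=\{\,j,\ r/2-j,\ r/2+j,\ r-j\,\}$ is exactly the paper's center-symmetric quadruple $\{a_1,b_1,a_2,b_2\}$ rewritten in block-local coordinates. You go further than the paper by actually verifying the step it only asserts — that the butterfly maps conjugate pairs to conjugate pairs (via $W_r^{\,r/2-j}=-\overline{\omega}$ and $y_{r/2-j}=\overline{y_{j+r/2}}$) — and by handling the degenerate indices $j=0$ and $j=r/4$, the slot bookkeeping, and the induction explicitly, so your argument is a strict refinement of the published proof.
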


\begin{proof}
Consider two conjugate-symmetric outputs $x(a_1)$ and $x(b_1)$ from an $m$-point FFT stage, where their indices are given by
\[
a_1 = 2km + \tfrac{m}{2} - i, \quad b_1 = 2km + \tfrac{m}{2} + i
\]
for some integers $k$ and $i$. In the next $2m$-point FFT stage, these values participate in butterfly operations with their counterparts at
\[
a_2 = (2k+1)m + \tfrac{m}{2} - i, \quad b_2 = (2k+1)m + \tfrac{m}{2} + i.
\]

Define the center of the $2m$-point FFT block as $c = (2k+1)m$. Then, the offsets from the center are:
\begin{align*}
c - a_1 &= \tfrac{m}{2} + i, &
c - b_1 &= \tfrac{m}{2} - i, \\
c - a_2 &= -\tfrac{m}{2} + i, &
c - b_2 &= -\tfrac{m}{2} - i.
\end{align*}

This confirms that the set $\{a_1, b_1, a_2, b_2\}$ is symmetric with respect to the center index $c$, and forms two conjugate-symmetric pairs placed at symmetric offsets.Since the FFT butterfly operations preserve conjugate symmetry when applied to conjugate inputs, the output values at this stage also maintain the same symmetry.

Therefore, all computations involving these four values can be performed in-place, and the symmetric layout remains valid in the next FFT stage.
\end{proof}

This symmetry ensures that at every level of the decomposition, each butterfly involving a conjugate pair produces another conjugate pair. As a result, conjugate symmetry is recursively preserved across all stages of the Cooley–Tukey FFT algorithm.

As a concrete example, consider the case shown in Fig.~\ref{fig:overview} (Float-to-Float FFT), where an 8-point FFT stage has a conjugate pair located at indices $a = 2$ and $b = 6$. In the following 16-point FFT stage, these values participate in butterfly operations with elements at indices $10$ and $14$. The resulting index pairs $(2, 14)$ and $(6, 10)$ are symmetric with respect to the center of the 16-point block and thus also form conjugate pairs.

Consequently, the interleaved memory layout—where the real part of $y_k$ is stored at index $k$ and the imaginary part at index $r - k$—remains consistent throughout the entire FFT computation. This enables the algorithm to be executed entirely in-place within a real-valued buffer, eliminating the need to explicitly store redundant conjugate components.

Based on the recursive structure and symmetry, we summarize our in-place real-domain FFT algorithm:
\begin{enumerate}
\item Store real and imaginary parts of conjugate pairs in an interleaved layout within the input real-valued memory space;
\item Perform all butterfly operations entirely in-place without auxiliary buffers;
\item Reconstructs the complex frequency spectrum from a real input of length $N$.
\end{enumerate}
Overall, our design enables fully in-place FFT computation for real-valued input, with zero memory overhead and compatible with the Cooley–Tukey algorithm.

\subsection{In-place IFFT with Symmetric Complex-valued Input}  
While the FFT benefits from conjugate symmetry in real-valued inputs, the inverse transform receives conjugate-symmetric complex values as input. 
However, unlike FFT computation, the outputs of sub-IFFTs in the Cooley–Tukey recursion are not guaranteed to be real-valued or conjugate-structured. 
This makes it difficult to directly apply the aforementioned in-place memory sharing mechanism.

To overcome the input difference in IFFT, we exploit the linearity of the FFT. 
Since each butterfly operation is a linear combination of its inputs, we can reverse the forward FFT computation structure to recover the original real-valued signal. 
Specifically, we compute the inverse using the same butterfly graph but reverse the direction of data flow and scale the results appropriately.

As illustrated in Fig.\ref{fig:overview} (Float-to-Float IFFT section), we compute intermediate outputs from the complex input $Y$ as follows:
\begin{equation}
\begin{split}
x_2 &= \frac{y_2 + y_{10}}{2}, \quad x_6 = \frac{y_6 + y_{14}}{2}, \\
x_{10} &= \frac{y_2 - y_{10}}{2 W_N^2}, \quad x_{14} = \frac{y_6 - y_{14}}{2 W_N^6},
\end{split}
\end{equation}
where $W_N^k = \exp\left(-2\pi i k / N\right)$ denotes the $k$-th twiddle factor.

In this formulation, we split each conjugate pair into symmetric and anti-symmetric components, enabling recovery of the original signal through only real-domain operations. By carefully reusing buffer locations during this process, we implement the in-place IFFT computation without auxiliary memory for intermediate complex arrays.

\paragraph{Symmetry in Circulant Matrix based Training.}
According to Eq. \ref{eq:circ_forward}, even though the FFT results are naturally symmetric for real-valued inputs, there is elementwise multiplication between two FFT results. 
Given that $\overline{A\cdot B}=\overline{A}\cdot\overline{B}$, it follows that the elementwise multiplication result maintains the symmetry property as in FFT results of input and circulant weight vector. 
Therefore, the IFFT operation input in Eq. \ref{eq:circ_forward} and Eq. \ref{eq:circ_backward} are with symmetric complex-valued input.

\section{Experiments}
All our experiments compare three different FFT implementations:
\textbf{(1) fft:} standard complex-valued FFT using \texttt{torch.fft.fft}/\texttt{ifft} from PyTorch \citep{paszke2017automatic};
\textbf{(2) rfft:} real-input FFT using \texttt{torch.fft.rfft}/\texttt{irfft}, exploiting Hermitian symmetry;
\textbf{(3) ours:} custom CUDA-based real-domain FFT with in-place forward/backward implementation, reusing the input real-valued memory for intermediate result storage.

We also include two common baselines:
\textbf{(1) FF}: updating all trainable parameters;
\textbf{(2) lora}\citep{lora}: low-rank adaptation with parameter-efficient updates.
For fair comparison, all runs use the same training configuration (batch size, optimizer, precision). 

\subsection{Memory Efficiency} 
To evaluate the memory efficiency of our proposed in-place training, we conduct experiments in two settings:
\textbf{(1) single-layer analysis:} 
we perform training on a singular linear layer with different training methods, where circulant matrix based training \citep{circulant} are accomplished with different FFT implementations;
\textbf{(2) full-model training:} we apply the circulant fine-tuning approach \citep{circulant} to RoBERTa-large and LLaMA2-7B and monitor memory usage throughout training.
All circulant variants share the same number of trainable parameters, differing only in FFT backend. 

Peak memory is recorded using PyTorch memory profiler.
To better understand the memory distribution, we also visualize the breakdown of memory usage (model weights, trainable params, gradients, others) in both the single-layer and full-model training.

\subsubsection{Single Layer Training}

\begin{table}[tb]
\caption{Peak GPU memory usage (in MB) measured during single layer training (up to the end of the backward pass) for different methods under varying input shapes. 
For inputs of shape $D=4096$, LoRA uses rank 64; for $D=1024$, the rank is 32. 
Entries marked as “N/A” indicate that the specified block size (e.g., 4096) is incompatible with the given input shape (e.g., 1024).
Values in parentheses denote how many times memory is reduced compared to full fine-tuning.}
\label{tab:peak_memory_single}
\centering
\setlength{\tabcolsep}{3pt} 
\resizebox{\textwidth}{!}{
\begin{tabular}{l|c c c |c c c}
\toprule
GPU Mem. & \multicolumn{3}{c|}{D = 4096} & \multicolumn{3}{c}{D = 1024} \\
 (MB) & {B=1} & {B=16} & {B=256} & {B=1} & {B=16} & {B=256} \\
\midrule
full-finetune & $144.33$ & $145.50$ & $164.25$ & $24.27$ & $24.56$ & $29.25$ \\
\midrule
$\text{lora}$ & $20.31(\times 7.11)$ & $21.25(\times 6.85)$ & $39.38(\times 4.17)$ & $16.77(\times 1.45)$ & $17.00(\times 1.44)$ & $21.69(\times 1.35)$ \\
\midrule
$\text{fft}_{p=128}$ & $3.65(\times 39.55)$ & $35.88(\times 4.06)$ & $551.50(\times 0.30)$ & $0.25(\times 95.22)$ & $2.66(\times 9.22)$ & $41.22(\times 0.71)$ \\
$\text{rfft}_{p=128}$ & $3.14(\times 45.93)$ & $35.14(\times 4.14)$ & $547.13(\times 0.30)$ & $0.22(\times 111.20)$ & $2.53(\times 9.72)$ & $40.30(\times 0.73)$ \\
$\text{ours}_{p=128}$ & \textbf{1.06($\times 135.78$)} & \textbf{2.00($\times 72.73$)} & \textbf{20.50($\times 8.01$)} & \textbf{0.08($\times 308.73$)} & \textbf{0.34($\times 71.35$)} & \textbf{5.03($\times 5.81$)}  \\
\midrule
$\text{fft}_{p=256}$ & $1.89(\times 76.24)$ & $19.03(\times 7.65)$ & $293.25(\times 0.56)$ & $0.15(\times 166.80)$ & $1.61(\times 15.24)$ & $25.08(\times 1.17)$  \\
$\text{rfft}_{p=256}$ & $1.62(\times 89.17)$ & $18.35(\times 7.93)$ & $286.06(\times 0.57)$ & $0.12(\times 194.92)$ & $1.48(\times 16.63)$ & $24.02(\times 1.22)$ \\
$\text{ours}_{p=256}$ & \textbf{0.56($\times 256.36$)} & \textbf{1.50($\times 96.97$)} & \textbf{20.25($\times 8.11$)} & \textbf{0.05($\times 512.42$)} &\textbf{0.33($\times 74.74$)} & \textbf{5.02 ($\times 5.83$)} \\
\midrule
$\text{fft}_{p=512}$ & $1.02(\times 141.97)$ & $10.63 (\times 13.68)$ & $164.50(\times 1.00)$ & $0.09 (\times 267.23)$ & $1.09 (\times 22.60)$ & $17.03(\times 1.72)$ \\
$\text{rfft}_{p=512}$ & $0.86(\times 167.28)$ & $10.03 (\times 14.51)$ & $156.66(\times 1.05)$ & $0.08 (\times 312.61)$ & $0.96 (\times 25.68)$ & $15.05(\times 1.94)$ \\
$\text{ours}_{p=512}$ & \textbf{0.31($\times 461.14$)} & \textbf{1.38($\times 105.78$)} & \textbf{20.13 ($\times 8.16$)} & \textbf{0.03 ($\times 764.69$)} & \textbf{0.32 ($\times 76.56$)} & \textbf{5.01($\times 5.84$)}  \\
\midrule
$\text{fft}_{p=1024}$ & $0.58(\times 249.23)$ & $6.44(\times 22.59)$ & $100.22(\times 1.64)$ & $0.06 (\times 382.35)$ & $0.83 (\times 29.76)$ & $13.01(\times 2.25)$ \\
$\text{rfft}_{p=1024}$ & $0.49(\times 295.88)$ & $5.88(\times 24.73)$ & $92.24 (\times 1.78)$ & $0.05 (\times 447.79)$ & $0.70 (\times 35.15)$ & $11.02(\times 2.65)$ \\
$\text{ours}_{p=1024}$ & \textbf{0.19($\times 767.76$)} & \textbf{1.31($\times 110.82$)}& \textbf{20.06 ($\times 8.19$)} & \textbf{0.02 ($\times 1,014.39$)} & \textbf{0.32 ($\times 77.50$)} & \textbf{5.00($\times 5.84$)}  \\
\midrule
$\text{fft}_{p=4096}$ & $0.25(\times 575.07)$ & $3.30(\times 44.12)$ & $52.05(\times 3.16)$ & N/A & N/A & N/A \\
$\text{rfft}_{p=4096}$ & $0.21(\times 698.79)$ & $2.78(\times 52.25)$ & $44.04(\times 3.73)$ & N/A & N/A & N/A  \\
$\text{ours}_{p=4096}$ & \textbf{0.09($\times 1,531.54$)} & \textbf{1.27($\times 114.92$)} & \textbf{20.02($\times 8.21$)} & N/A & N/A & N/A  \\
\bottomrule
\end{tabular}
}
\end{table}

\textbf{Setups.}
To isolate the memory overhead introduced by different FFT implementations, we conduct controlled experiments on a single fine-tuned layer using an NVIDIA A100 GPU. 
We vary the input dimension $\mathbf{x} \in \mathbb{R}^{B \times D}$, with $D \in \{1024, 4096\}$ and batch size $B \in \{1, 16, 256\}$. 
For circulant-based methods, we further vary the block size to evaluate its influence on memory usage. 
Peak memory is recorded during both forward and backward passes, and the results are reported in Tab.\ref{tab:peak_memory_single}.
Fig.\ref{fig:singlelayer_breakdown} shows the memory breakdown for the large setup ($D=4096$) under batch sizes $B=1$ and $B=256$, highlighting the memory footprint of intermediate tensors created during forward computation and gradients allocated during backpropagation. 
This breakdown reveals the impact of in-place FFT on reducing transient memory usage.

\begin{figure}[!t]

\centering
\subfloat[Memory allocation ($B = 1$)]{ 
\includegraphics[width=0.48\textwidth]{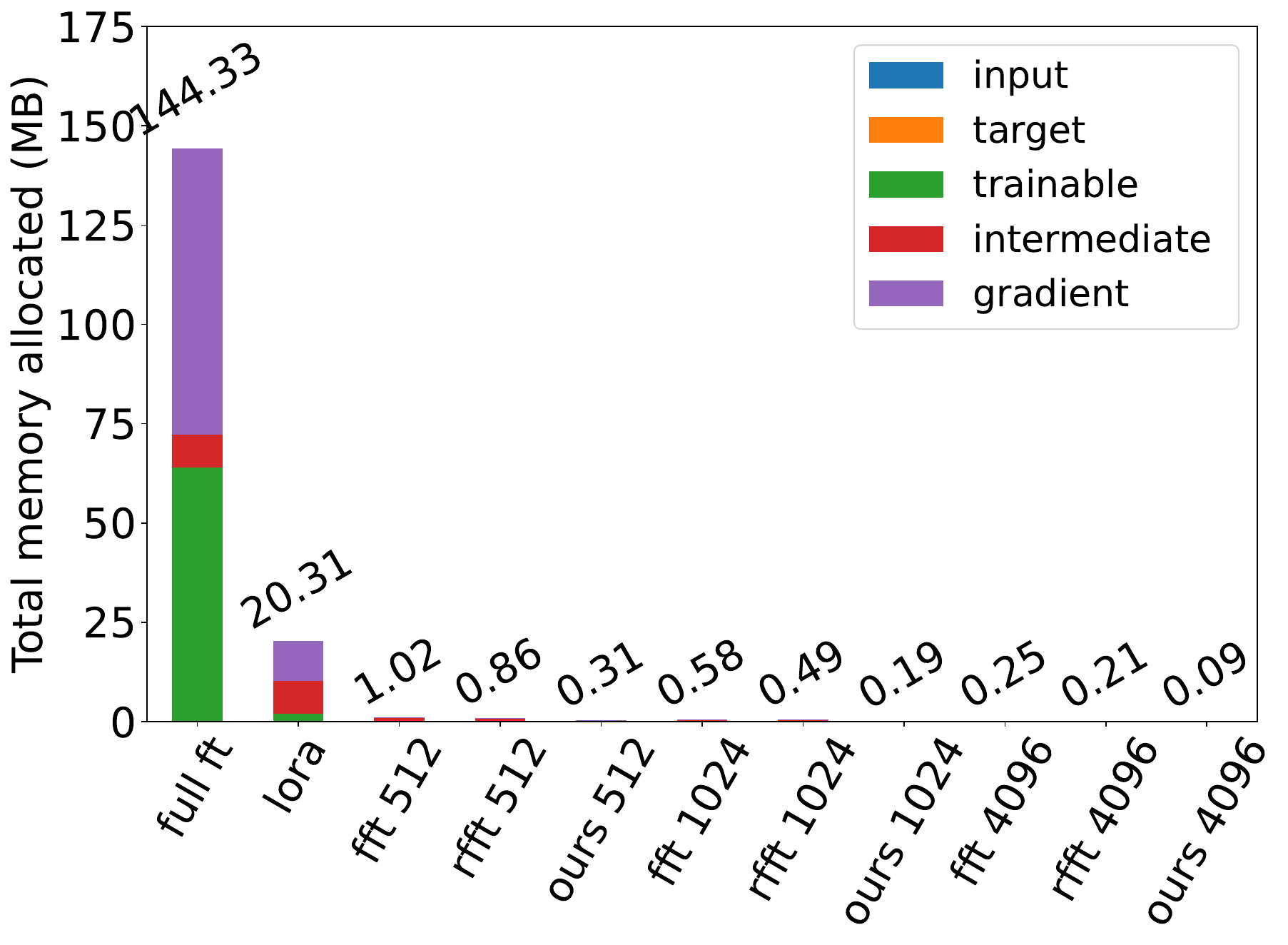}
}
\subfloat[Memory allocation ($B = 256$)]{
\includegraphics[width=0.48\textwidth]{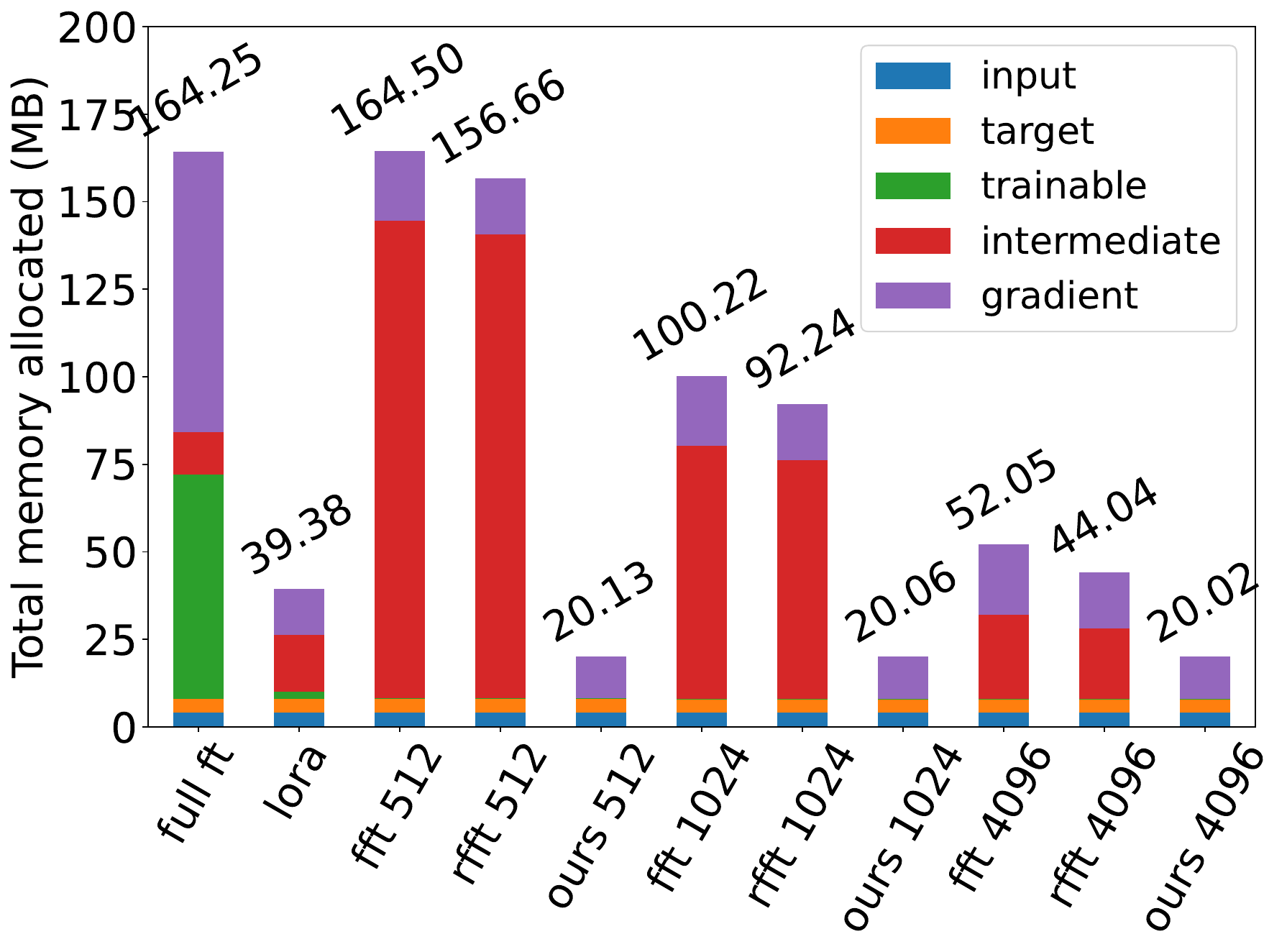}
}

\caption{
Memory breakdown during single-layer fine-tuning with hidden dimension $D=4096$, under two batch sizes: (a) $B = 1$ and (b) $B = 256$. \textit{Intermediate tensors} are allocated during the forward pass, while \textit{gradients} appear in the backward pass. This illustrates how batch size impacts memory allocation for activations and gradients.
}
\label{fig:singlelayer_breakdown}
\end{figure}

\textbf{Results.}
Tab.\ref{tab:peak_memory_single} and Fig.\ref{fig:singlelayer_breakdown} clearly demonstrate the effectiveness of our in-place method. 
When the batch size is small, all circulant variants show memory advantages over full fine-tuning and LoRA. However, as the block size decreases and the batch size increases, standard FFT-based circulant layers incur increasing overhead from intermediate tensors, especially during the forward pass.
In contrast, our method performs the entire forward computation in-place, introducing no intermediate tensors. As shown in Fig.\ref{fig:singlelayer_breakdown}, this leads to significant memory savings in the single-layer setup. Moreover, by overwriting the \texttt{grad\_output} in-place at the final stage of the backward pass, our method also reduces memory usage during gradient computation.


\subsubsection{Full Model Training}

\begin{table}[tb]
\caption{Peak GPU memory usage across different training stages during one epoch on LLaMA2-7B and RoBERTa-large. 
\textit{model} indicates memory used to load the base model; \textit{trainable} refers to memory allocated for trainable parameters; 
\textit{gradient} denotes the analytically estimated memory for gradients of trainable parameters; 
\textit{others} represents the remaining memory, computed as the difference between the peak usage and the sum of the above three, accounting for buffers, activations, and miscellaneous overhead.}
\label{tab:model_mem}
\centering
\setlength{\tabcolsep}{3pt} 
\resizebox{\textwidth}{!}{
\begin{tabular}{l|c c c c c|l|c c c c c}
\toprule
\multirow{3}{*}{Method} & \multicolumn{5}{c|}{LLaMA2-7b } & \multirow{3}{*}{Method} & \multicolumn{5}{c}{RoBERTa-Large} \\
~ & model & trainable & gradient & others & total & ~ & model & trainable & gradient & others & total\\
~ & (GB) & (MB) & (MB) & (GB) & (GB) & ~ & (GB) & (MB) & (MB) & (GB) & (GB) \\
\midrule
FF & $12.61$ & $0.00$ & $6144.00$ & $8.28$ & $26.90$ & FF & $ 1.33$ & $0.00$ & $ 192.00$ & $4.63$ & $6.15$ \\
\midrule
$\text{lora}_{r=32}$ & $12.61$ & $48.00$ & $ 96.00$ & $6.20$ & $ 18.96$ & $\text{lora}_{r=8}$ & $1.33$ & $3.00$ & $ 3.00$ & $4.90$ & $ 6.24$ \\
$\text{lora}_{r=64}$ & $12.61$ & $96.00$ & $ 192.00$ & $ 6.26$ & $19.15$ & $\text{lora}_{r=16}$ & $1.33$ & $6.00$ & $ 6.00$ & $4.92$ & $6.26$ \\
\midrule
$\text{fft}_{p=512}$ & $12.61$ & $6.00$ & $12.00$ & $8.17$ & $20.81$ & $\text{fft}_{p=256}$ & $1.33$ & $0.75$ & $0.75  $ & $5.39$ & $6.72$ \\
$\text{rfft}_{p=512}$ & $12.61$ & $6.00$ & $12.00$ & $6.65$ & $19.28$ & $\text{rfft}_{p=256}$ & $1.33$ & $0.75$ & $0.75$   & $4.80$ & $ 6.13$ \\
$\text{ours}_{p=512}$ & \textbf{12.61} & \textbf{6.00} & \textbf{12.00} & \textbf{5.30} & \textbf{ 17.93} & $\text{ours}_{p=256}$ & \textbf{1.33} & \textbf{0.75} & \textbf{0.75}  & \textbf{4.44} & \textbf{5.77} \\
\midrule
$\text{fft}_{p=1024}$ & $12.61$ & $3.00$ & $6.00$ & $6.58$ & $19.20$ & $\text{fft}_{p=512}$ & $ 1.33$ & $0.38$ & $0.38$   & $5.41$ & $6.74$ \\
$\text{rfft}_{p=1024}$ & $12.61$ & $3.00$ & $6.00$ & $6.55$ & $19.17$ & $\text{rfft}_{p=512}$ & $1.33$ & $0.38$ & $0.38$   & $4.76$ & $6.08$ \\
$\text{ours}_{p=1024}$ & \textbf{12.61} & \textbf{3.00} & \textbf{6.00} & \textbf{5.30} & \textbf{ 17.92} & $\text{ours}_{p=512}$ & \textbf{1.33} & \textbf{0.38} & \textbf{0.38}   & \textbf{4.44} & \textbf{5.77} \\
\midrule
$\text{fft}_{p=4096}$ & $12.61$ & $0.75$ & $1.50$ & $8.09$ & $20.71$ & $\text{fft}_{p=1024}$ & $1.33$ & $0.19$ & $0.19$   & $5.39$ & $6.72$ \\
$\text{rfft}_{p=4096}$ & $12.61$ & $0.75$ & $1.50$ & $6.55$ & $19.16$ & $\text{rfft}_{p=1024}$ & $1.33$ & $0.19$ & $0.19$   & $4.76$ & $6.09$ \\
$\text{ours}_{p=4096}$ & \textbf{ 12.61} & \textbf{0.75} & \textbf{1.50} & \textbf{5.29} & \textbf{17.91} & $\text{ours}_{p=1024}$ & \textbf{ 1.33} & \textbf{0.19} & \textbf{0.19}   & \textbf{4.44} & \textbf{5.77} \\
\bottomrule
\end{tabular}
}
\end{table}

\textbf{Setups.}
We conduct full-model experiments on both LLaMA2-7B and RoBERTa-large using an NVIDIA A100 GPU. 
For LLaMA2-7B, we use the GSM8K dataset with \texttt{per\_device\_train\_batch\_size} set to 2 and \texttt{gradient\_accumulation\_steps} set to 4. 
For RoBERTa-large, we use the MRPC dataset with a batch size of 32. 
These configurations follow the standard precision training setups for each task \cite{circulant}, and we retain them to reflect real-world GPU memory cost. 
We use stochastic gradient descent (SGD) as the optimizer in all experiments.

\textbf{Results.}
From Tab.\ref{tab:model_mem}, we observe that all methods have the same \textit{base model} memory, and the memory for \textit{trainable\_params} is negligible compared to the total model size. 
It can be noticed that \textit{others} take up the second largest memory consumption, which is managed by PyTorch framework for storing activations, dynamic memory allocation and release, etc.  
In LLaMA2-7B, \textit{gradient} memory is approximately twice that of \textit{trainable\_params} because the forward pass uses \texttt{bf16} to reduce memory usage, but gradients must be stored in \texttt{float32} as backward computations do not support \texttt{bf16}. 
In contrast, RoBERTa-large uses full-precision training, so the gradient memory matches the parameter memory size.

It is also worth noting that fft and rfft implementations do not support \texttt{bf16} arithmetic, limiting their memory optimization potential during the forward pass. While full fine-tuning does not incur extra memory for \textit{trainable\_params} (as the base model is updated directly), it still requires large \textit{gradient} storage due to the number of trained parameters.

Our method, in contrast, outperforms all FFT and inverse FFT operations in-place with native support for real-valued \texttt{bf16} input. 
Besides, our method also outperforms LoRA adapters which has been widely adopted due to its small parameter amount and memory consumption.
As a result, it consistently achieves the lowest peak memory usage across training steps, showing superior memory efficiency in practical fine-tuning setups.

\subsection{Runtime Efficiency and Numerical Accuracy}

To comprehensively evaluate the proposed \textbf{rdfft} operator, we analyze both its low-level computational efficiency and numerical accuracy (operator-level), as well as its end-to-end performance when integrated into large-scale fine-tuning tasks (model-level). This dual perspective allows us to capture both micro-level operator behavior and macro-level training characteristics.

\subsubsection{Operator-Level Evaluation}

At the operator level, Tab.\ref{tab:operator-level} presents both runtime and numerical accuracy.  
At small to medium sizes (512 and 1024), ours achieves competitive runtime with rfft, suggesting that in-place execution can be efficient when synchronization cost is limited. At larger sizes (4096), however, the runtime overhead increases due to CUDA thread-block limitations, where synchronization is required both within and across blocks. Moreover, the inverse transform (ours) is faster than the forward one, since it reuses the butterfly structure in reverse order, thereby reducing dependencies.

In terms of accuracy, since \text{rdfft} only reformulates the memory layout to support true in-place execution, it introduces no information loss and preserves the mathematical equivalence of the Fourier transform. 
Tab.\ref{tab:operator-level} shows that both absolute and relative errors remain at the level of floating-point numerical noise, confirming that \text{rdfft} faithfully reproduces the FFT spectrum.

\begin{table}[tb]
\centering
\caption{Standalone operator runtime and numerical accuracy of FFT variants. 
Runtime (RT, in ms) is measured on an A800 GPU with FP32 precision, averaged over 1000 runs. 
Each operator is evaluated for both forward and inverse transforms, as shown in the table. 
Operator-level numerical accuracy of \text{rfft} and \text{ours} is evaluated against the \texttt{torch.fft.fft} baseline, with errors reported as absolute and relative values.
Entries marked as “N/A” indicate that the baseline \text{fft} is used for reference, and thus its self-comparison is unnecessary.
}
\label{tab:operator-level}
\resizebox{\textwidth}{!}{
\begin{tabular}{l|l|ccc|ccc|ccc}
\toprule
\multicolumn{2}{c|}{~} & \multicolumn{3}{c|}{p=512} & \multicolumn{3}{c|}{p=1024} & \multicolumn{3}{c}{p=4096} \\
\multicolumn{2}{c|}{Method} & $\text{fft}$ & $\text{rfft}$ & $\text{ours}$ & $\text{fft}$ & $\text{rfft}$ & $\text{ours}$ & $\text{fft}$ & $\text{rfft}$ & $\text{ours}$ \\
\midrule
\multirow{2}{*}{RT} & forward & 0.0246 & 0.0195 & \textbf{0.0279} & 0.0249 & 0.0197 & \textbf{0.0319} & 0.0252 & 0.0199 & \textbf{0.0687}\\
~ & inverse & 0.0325 & 0.0450 & \textbf{0.0233} & 0.0322 & 0.0421 & \textbf{0.0272} & 0.0327 & 0.0470 & \textbf{0.0503} \\
\midrule
\multirow{2}{*}{Acc.} & absolute & N/A & 1.88e-07 & \textbf{5.99e-07} & N/A & 1.92e-07 & \textbf{5.75e-07} & N/A & 2.55e-07 & \textbf{5.84e-07}\\
~ & relative & N/A & 0.0001 & \textbf{0.0008} & N/A & 0.0001 & \textbf{0.0005} & N/A & 0.0012 & \textbf{0.0018}\\
\bottomrule
\end{tabular}
}
\end{table}

\subsubsection{Model-Level Evaluation}


\begin{table}[tb]
\centering
\caption{
Runtime throughput and MRPC accuracy of different fine-tuning methods.
Token-level throughput (Thr., in k~tokens/sec) is measured on \text{LLaMA-2-7B} using the GSM8K dataset with one A800 GPU, 
while MRPC classification accuracy (Acc., \%) is evaluated on \text{RoBERTa-large}. 
Accuracy results are reported from the Block-Circulant Adapter (BCA) work \citep{circulant}, where only limited configurations were provided, leading to some “N/A” entries. 
All \text{lora} experiments use rank $r=32$. 
For circulant-based methods, p denotes the block size.
}
\label{tab:model-level}
\begin{tabular}{l|c|c|ccc|ccc|ccc}
\toprule
\multirow{2}{*}{Method} & \multirow{2}{*}{\text{FF}}  & \multirow{2}{*}{\text{lora}} & \multicolumn{3}{c|}{p=512} & \multicolumn{3}{c|}{p=1024} & \multicolumn{3}{c}{p=4096} \\
~ & ~ & ~ & $\text{fft}$ & $\text{rfft}$ & $\text{ours}$ & $\text{fft}$ & $\text{rfft}$ & $\text{ours}$ & $\text{fft}$ & $\text{rfft}$ & $\text{ours}$ \\
\midrule
\text{Thr. (k)} & 3.29 & 3.36 & 1.45 & 1.77 & \textbf{0.92} & 1.45 & 1.77 & \textbf{0.93} & 1.45 & 1.77 & \textbf{0.93} \\
\text{Acc. (\%)} & 90.9 & 90.2 & N/A & 90.7 & \textbf{90.0} & N/A & 89.7 & \textbf{90.3} & N/A & N/A & N/A \\
\bottomrule
\end{tabular}
\end{table}

At the model level, Tab.~\ref{tab:model-level} reports both training throughput and downstream task accuracy.  
While our method exhibit lower throughput compared to \text{fft} and \text{rfft}, they eliminate all intermediate buffer allocations during both forward and backward passes, resulting in substantial GPU memory savings—a key advantage for large-scale fine-tuning under limited hardware.

In downstream evaluation on the MRPC benchmark, all experiments were repeated with multiple random seeds for consistency. Our method achieves task-level accuracy on par with \texttt{rfft} and full fine-tuning, indicating no degradation in learning quality.  
Together with the operator-level evidence, these results verify that \textbf{rdfft} is a reliable drop-in replacement for real-input FFT computations, offering strong memory efficiency while maintaining runtime competitiveness and numerical fidelity.
\section{Conclusion}
In this work, we present rdFFT, a novel real-domain, fully in-place Fourier transform framework designed for memory-efficient neural computation. 
Our method enables seamless integration into modern deep learning pipelines and supports consistent forward and backward passes entirely in the real domain.
Extensive experiments on NLU benchmarks demonstrate that rdFFT significantly reduces memory consumption. 
Our results highlight the potential of operator-level memory optimization as a complementary and lossless strategy to existing model compression methods.
In future work, we plan to extend rdFFT to support broader classes of structured transformations and explore its integration with hardware-aware training frameworks for edge deployment.

\paragraph{Limitations.}
\label{sec_limit}
While our method enables real-valued inputs to undergo Fourier transformation and remain in real-valued storage with a corresponding inverse transform, it inherently encodes frequency-domain information in an implicit form. 
This design is well-suited for use cases that do not require direct manipulation of the complex spectrum. 
However, for scenarios where explicit access to the complex-valued frequency representation is needed—such as spectral filtering or custom frequency-domain operations—additional logic is required to decode the real-valued encoding into a usable complex form. 
Once this conversion is performed, it typically involves casting the data to a complex type, which breaks the in-place memory symmetry and incurs additional memory overhead. 
As such, our framework is most effective in applications where complex-domain access is not strictly required.

\newpage

\section*{Acknowledgments}
This work was ﬁnancially supported by the National Key R\&D Program of China (Grant No. 2024YFA1211400) and the Key Project of Shenzhen Basic Research Program (Grant No. JCYJ20241206180301003).

\bibliographystyle{IEEEtran}
\bibliography{arxiv}

\newpage
\appendix

\end{document}